\documentclass[]{fairmeta}
\usepackage{lineno}
\usepackage{calc}
\usepackage{amsmath}
\usepackage{caption}
\usepackage{multirow}
\usepackage{amsfonts}
\usepackage{xspace}
\usepackage{booktabs}
\usepackage{pdfpages}
\usepackage{xcolor}
\usepackage{tabularx}
\usepackage{bbm}
\usepackage{graphicx}
\usepackage{algorithm}
\usepackage{algorithmic}
\usepackage{enumitem}
\usepackage{pifont}

\usepackage{amsmath}
\usepackage{amssymb}
\usepackage{mathtools}
\usepackage{amsthm}
\usepackage{xspace}
\usepackage{algorithm}
\usepackage{algorithmic}
\usepackage{wrapfig}
\usepackage{enumitem}

\usepackage[export]{adjustbox}

\newcommand{\ours}{\textsc{SAGE-OPD}\xspace}

\newcommand{\tabref}[1]{Table~\ref{#1}}
\newcommand{\figref}[1]{Fig.~\ref{#1}}
\newcommand{\eqnref}[1]{\text{Eq.}~(\ref{#1})}
\newcommand{\secref}[1]{\S\ref{#1}}
\newcommand{\appref}[1]{Appendix~\ref{#1}}

\theoremstyle{plain}
\newtheorem{theorem}{Theorem}[section]
\newtheorem{proposition}[theorem]{Proposition}

\theoremstyle{definition}

\theoremstyle{remark}

\title{\ours: Selective Agent-Guided Intervention for Multi-Turn On-Policy Distillation}

\author{Yuhang Zhou}
\author{Lizhu Zhang}
\author{Yifan Wu}
\author{Mingyi Wang}
\author{Bo Peng}
\author{Jiayi Liu}
\author{Xiangjun Fan}
\author{Zhuokai Zhao}

\affiliation{Meta AI}

\abstract{On-policy distillation (OPD) improves student models by training them on trajectories induced by their own policy, making it a promising approach for mitigating exposure bias in agent training.
However, most OPD studies focus on single-turn settings, while realistic LLM agents interact with environments over multiple turns.
In this regime, early errors can alter future observations and compound across the trajectory, and standard dense token-level OPD becomes brittle, as it may over-penalize semantically valid alternatives, reinforce local degeneracies such as repeated actions, and propagate unreliable teacher supervision on off-distribution histories.
We propose \textbf{\ours}, a verifier-free selective intervention framework specifically designed for multi-turn OPD.
Instead of applying teacher supervision uniformly across all turns, \ours first observes environment feedback and uses teacher judgment to decide whether each student response should be skipped or intervened on.
To further address compounding errors, \ours weights token-level distillation by teacher confidence, reducing the influence of uncertain teacher distributions on corrupted or ambiguous histories.
Finally, \ours applies loss normalization to preserve the overall loss scale of standard OPD while retaining selective turn-level weighting.
Experiments on agent tasks show that \ours consistently improves over baselines, achieving up to a $13.3\%$ relative improvement in ALFWorld unseen success rate over standard OPD.
Ablation studies further demonstrate that turn-level intervention, teacher confidence weighting, and loss normalization provide complementary benefits.
Our results suggest that effective multi-turn OPD should remain on-policy, but teacher supervision should be selectively allocated to turns where intervention is necessary and reliable.
}

\date{\today}
\correspondence{Yuhang Zhou and Zhuokai Zhao at \email{\{zyhang, zhuokai\}@meta.com}\\}

\begin{document}

\maketitle

\section{Introduction}
\label{sec:introduction}

On-policy distillation (OPD) has become a leading approach for post-training open-weight reasoning models~\citep{agarwal2024policy, lu2025onpolicydistillation, song2026survey, zhou2026omniopd}.
By training the student on samples generated from its own policy, OPD mitigates the exposure-bias limitation of offline supervised fine-tuning (SFT), where the student is optimized only on teacher-generated states~\citep{zhou2025mixture, hao2026self}.
This on-policy property becomes especially important when moving from single-turn reasoning to multi-turn LLM agents.
Unlike single-turn generation, a multi-turn agent repeatedly acts in an environment and conditions on the resulting observations, so each student action can influence the future states encountered during training and evaluation.
OPD appears to be a natural fit for agent training, because it allows a strong teacher to supervise the student on histories induced by the student's own behavior, rather than only on expert-generated trajectories.

However, most existing OPD studies primarily evaluate single-turn generation on reasoning tasks~\citep{yang2026learning, jin2026entropy}, where each example is completed in one model rollout and there is no environment state that evolves in response to intermediate actions.
This setting abstracts away a central difficulty of realistic agent learning.
In multi-turn interaction, errors can compound over time (compounding errors): an early incorrect action may alter the environment state, distort later observations, and make subsequent decisions increasingly harder to recover.

Despite the advantage of using student-induced trajectories, recent analyses have revealed that the token-level logit signal underlying standard OPD is brittle~\citep{li2026rethinking, luo2026demystifying}.
This brittleness can be amplified in multi-turn agent training.
First, standard OPD applies dense token-level alignment between the student and teacher distributions, even when several continuations are semantically valid~\citep{li2026rethinking}.
For example, two actions may follow different surface forms or intermediate reasoning paths while still making equivalent progress toward the task.
In such cases, token-level distillation may over-penalize acceptable student behavior simply because it does not match the teacher's preferred wording or local decomposition.
Second, dense token-level gradients can reinforce local degeneracies~\citep{luo2026demystifying}, such as repeated thoughts, repeated actions, or shallow formatting patterns, when these prefixes accidentally receive favorable teacher likelihood.
These local degeneracies are particularly harmful in multi-turn interaction, because a repeated or malformed action does not only affect the current response but also changes the subsequent environment feedback.
Third, compounding errors can make later student-induced histories increasingly off-distribution for the teacher~\citep{li2025beyond}.
The teacher may still provide a distribution over next tokens, but that distribution can become less reliable when the current history is the result of earlier student mistakes.
Therefore, uniformly applying OPD to every token at every turn can both over-correct acceptable behavior and propagate unreliable supervision in corrupted histories.

These observations motivate our central question:
\begin{quote}
    \centering
    \textit{Can multi-turn OPD preserve the benefits of on-policy teacher supervision while selectively applying it only to turns where intervention is necessary and reliable?}
\end{quote}

We answer this question with \textbf{\ours{}}, a verifier-free selective intervention framework for multi-turn OPD.
Our key observation is that multi-turn interaction already segments the student trajectory into turn-level units, providing a natural granularity for teacher critique beyond token-level alignment.
For each turn, \ours{} observes the environment feedback and uses teacher judgment to decide whether the current student response should be skipped or intervened on.
This allows \ours{} to preserve the on-policy benefit of OPD while avoiding unnecessary dense supervision on acceptable or semantically flexible student responses.

However, deciding whether to intervene is not sufficient, since compounding errors may create histories where the teacher is uncertain about the precise token-level correction.
%
%
To reduce the influence of uncertain token-level teacher supervision on student-induced histories, \ours{} further weights each turn by the teacher's predictive confidence.
The final selective weight is the product of the intervention weight and the confidence weight, so strong OPD supervision is applied mainly when intervention is needed and the teacher signal is reliable.
Finally, we apply loss normalization to preserve the overall loss scale of standard OPD while retaining the relative turn-level weighting.

We evaluate \ours on various multi-turn agent benchmarks.
Notably, \ours achieves up to a $13.3\%$ relative improvement in success rate over standard OPD with Qwen-3 models~\citep{yang2025qwen3} on ALFWorld~\citep{shridhar2020alfworld}.
Ablation studies further show that turn-level intervention, teacher confidence weighting, and loss normalization provide complementary benefits.

Our contributions are summarized as follows:
\begin{itemize}
    \item We propose \ours{}, a verifier-free selective intervention framework for multi-turn OPD that preserves on-policy teacher supervision while avoiding uniformly dense distillation over all student-generated turns.
    \item We introduce three key components that make selective multi-turn OPD effective: teacher-judged turn-level intervention for deciding whether supervision is needed, teacher confidence weighting for estimating whether token-level supervision is reliable, and loss normalization for preserving the training signal scale.
    \item We demonstrate the effectiveness of \ours{} across embodied interaction, scientific task solving, and question answering, with strong gains over standard OPD, and other state-of-the-art OPD baselines.
\end{itemize}
\section{Related Work}
\label{sec:related_work}

\paragraph{Multi-turn LLM Agents.}
Recent LLM agents extend language models from single-turn text generation to interactive decision making, where the model repeatedly observes an environment, produces an action or tool call, and receives new feedback~\citep{yao2022react, yang2024swe, zhou2025mixture, chan2025mle, chen2025scaling, zhou2026synthetic}.
These multi-turn agents have been studied in embodied simulators~\citep{shridhar2020alfworld}, web navigation~\citep{gur2024real, team2025tongyi}, tool use~\citep{yao2024tau}, and search-based question answering~\citep{jin2025search}. 
Compared with single-turn tasks, multi-turn agent learning is more challenging because the model must maintain a long interaction history, reason over partially observable states, and recover from its own previous mistakes. 
An early suboptimal action can change the future observation stream, causing errors to compound across the trajectory~\citep{zhang2025agentracer}.

To improve multi-turn agent performance, prior work has mainly explored two training paradigms. 
The first is multi-turn SFT~\citep{teng2024fine, li2025beyond, zhou2025mixture}, which decomposes teacher trajectories into turn-level training examples and trains the model to imitate them. 
This can improve formatting and local action prediction, but it remains off-policy: the student is optimized on teacher-generated states rather than the histories induced by its own behavior. 
The second is reinforcement learning with verifiable rewards (RLVR)~\citep{shao2024deepseekmath, zhou2025disco, yang2025let}, where the agent receives reward from the final environment outcome~\citep{deepswe2025, wei2025webagent}. 
While this directly optimizes task success, it typically requires an external verifier or environment-defined reward signal, and the resulting reward is often sparse relative to the long sequence of student-generated turns, making credit assignment difficult~\citep{wang2026openclaw}. 
These limitations motivate on-policy distillation methods that can provide teacher supervision on student-induced trajectories while avoiding the sparsity of pure outcome-based reward learning.

\paragraph{On-Policy Distillation and Multi-turn Limitations.}
To mitigate the exposure bias of offline imitation, OPD shifts supervision from teacher-generated trajectories to the student's own generated distribution~\citep{agarwal2024policy, lu2025onpolicydistillation, zhou2026omniopd}. 
In OPD, the student first samples trajectories under its current policy, and the teacher then provides token-level feedback on the resulting student-induced states. 
This aligns training with inference-time behavior and allows the student to learn from states that arise from its own decisions, rather than only from expert demonstrations~\citep{zhao2026self, he2026self, hubotter2026reinforcement, hao2026self}.

Despite this advantage, recent analyses have revealed that the token-level logit signal underlying standard OPD is surprisingly brittle.
This brittleness is further amplified in multi-turn settings, where local token-level mismatches can compound across turns. 
First, OPD relies on a narrow token-level overlap between the teacher's and student's plausible next-token distributions~\citep{li2026rethinking}. 
When multiple continuations are semantically valid, dense token-level alignment can over-penalize acceptable alternatives. 
Second, OPD can reinforce degenerate local patterns when repeated or low-quality prefixes accidentally receive favorable token-level gradients~\citep{luo2026demystifying}; in multi-turn agents, this may manifest as repeated thoughts or repeated actions across consecutive turns. 
And third, multi-turn interaction amplifies distribution shift through compounding errors, where student thoughts or actions can push the environment history outside the teacher's effective state distribution, making later teacher supervision less reliable~\citep{wang2026tcod, li2025beyond}.

A concurrent line of our work, TCOD~\citep{wang2026tcod}, addresses this instability by controlling the trajectory depth and gradually expanding the student-controlled horizon through a temporal curriculum. 
In contrast, \ours does not rely on a manually designed curriculum over turn depth. 
Instead, we delegate the intervention decision to the teacher, i.e., for each turn, the teacher determines whether supervision should be skipped, weakly applied, or strongly applied. 
To further account for compounding errors that may move the student into states where the teacher is less reliable, \ours weights the token-level OPD signal by the teacher's predictive confidence. 
Thus, rather than deciding \emph{how long} to apply OPD, \ours decides \emph{where} OPD should be applied and how strongly it should be trusted.
\section{Methodology}
\label{sec:workflow}
\subsection{Preliminaries}

\paragraph{Multi-turn Agent Interaction.}
We consider an agent that interacts with an external environment over multiple turns.
At turn $t$, the agent receives an observation $o_t$ from the environment and conditions on the full interaction history
\begin{equation}
    h_t = (o_0, a_0, o_1, a_1, \ldots, o_{t-1}, a_{t-1}, o_t),
\end{equation}
where $a_i$ denotes the model response at turn $i$. 
The response $a_t$ may include natural language reasoning, tool-use arguments, and an executable action. 
After the executable part of $a_t$ is submitted to the environment, the environment returns the next observation:
\begin{equation}
    o_{t+1} \sim \mathcal{E}(o_{t+1} \mid h_t, a_t).
\end{equation}

A complete trajectory containing $T$ turns is therefore
\begin{equation}
    \tau = (h_0, a_0, h_1, a_1, \ldots, h_{T-1}, a_{T-1}),
\end{equation}
which terminates either when a termination action is taken or when the maximum horizon $T$ is reached. 
This dependency distinguishes multi-turn agent learning from single-turn generation: an early mistake can alter the environment state and affect all subsequent observations.

\paragraph{Multi-turn On-Policy Distillation.}
Let $\pi_\theta$ denote the student policy and $\pi_\phi$ denote the teacher policy. 
In multi-turn OPD, trajectories are sampled from the student policy rather than from the teacher, i.e.,
\begin{equation}
    a_t \sim \pi_\theta(\cdot \mid h_t), \quad t = 0,\ldots,T-1.
\end{equation}
The teacher then provides supervision on the student-induced states. A standard multi-turn OPD objective applies KL-style policy distillation across the student trajectory:
\begin{equation}
    \mathcal{L}_{\mathrm{OPD}}(\theta)
    =
    \mathbb{E}_{\tau \sim \pi_\theta}
    \left[
    \sum_{t=0}^{T-1}
    D_{\mathrm{KL}}
    \left(
    \pi_\theta(\cdot \mid h_t)
    \;\middle\|\;
    \pi_\phi(\cdot \mid h_t)
    \right)
    \right],
    \label{eq:opd_loss}
\end{equation}
The student is trained on its own state distribution, allowing the teacher to provide feedback on states that arise from the student's behavior instead of the teacher-generated trajectories.

However, applying OPD uniformly to every turn in a multi-turn trajectory can be suboptimal.
Early student errors may alter future observations and push later histories away from the teacher's reliable state distribution, making subsequent supervision less dependable.
Meanwhile, token-level OPD can be brittle when teacher and student policies assign probability mass to different but semantically valid continuations, or when dense alignment reinforces local degeneracies such as repeated thoughts or actions.
These issues motivate a selective mechanism that decides when teacher distillation should be applied and how strongly the resulting signal should be trusted.

\subsection{\ours}
\label{sec:method}
\begin{figure*}[t]
    \centering
    \includegraphics[width=\textwidth]{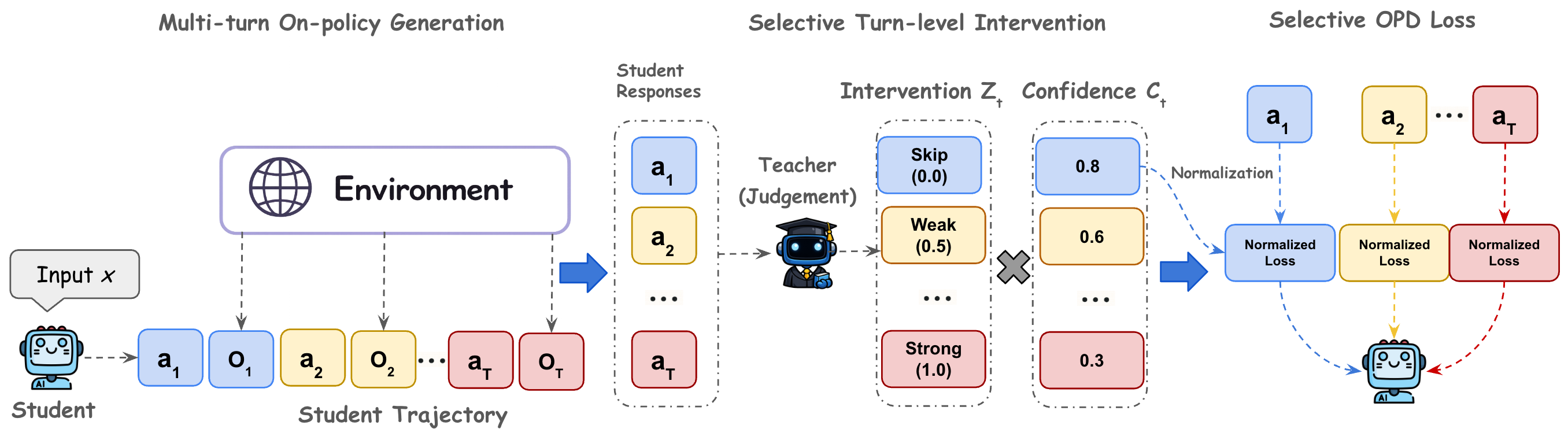}
    \caption{
    Overview of \ours.
    The student first performs multi-turn on-policy generation by interacting with the environment, producing a trajectory segmented into alternating actions and observations.
    For each student turn, \ours{} uses environment feedback and teacher judgment to determine whether the response should be skipped or intervened on, and combines this intervention decision with the teacher confidence score.
    The resulting selective weight modulates the OPD loss for each turn, so teacher supervision is applied more strongly when intervention is needed and the teacher signal is reliable.
    }
    \label{fig:our_framework}
\end{figure*}
The limitations of dense multi-turn OPD suggest that teacher supervision should be applied selectively rather than uniformly across all turns.
Standard OPD treats a student trajectory as a flat sequence of tokens and applies teacher distillation everywhere, which makes training sensitive to token-level mismatch even when the student response is semantically acceptable.
Multi-turn interaction provides a natural way to move beyond this flat token-level view: the trajectory is already segmented by environment feedback, where each turn produces a response $a_t$, receives an observation, and transitions to a new history $h_{t+1}$.
This turn structure allows the teacher to critique larger units of behavior and decide whether token-level distillation is needed for the current student response.
At the same time, teacher supervision should not be trusted uniformly across the trajectory.
Because student errors can compound over multiple turns, later histories may drift away from the teacher's reliable state distribution, making token-level supervision less dependable.

Motivated by these observations, and illustrated in \figref{fig:our_framework}, we propose \ours, a selective intervention framework for multi-turn OPD.
For each turn, \ours{} observes the environment feedback and uses teacher judgment to decide whether the current student response should be skipped or intervened on.
It then weighs the token-level OPD objective by the teacher's predictive confidence, so the supervision is the strongest when intervention is needed and the teacher signal is reliable.
Finally, \ours{} applies loss normalization to preserve the overall loss scale of standard OPD while retaining selective turn-level weighting.
The full training procedure is presented in Algorithm~\ref{alg:sl_opd} in \appref{sec:implementation_details}.

\subsection{Turn-Level Intervention}
\label{sec:method_intervention}
To obtain turn-level supervision without a task verifier, we introduce a discrete intervention variable
\begin{equation}
    z_t \in \{\textsc{Skip}, \textsc{Weak}, \textsc{Strong}\},
    \label{eq:intervention_labels}
\end{equation}
which determines how strongly teacher distillation should be applied at turn $t$. 
Each intervention label is mapped to a scalar intervention weight:
\begin{equation}
    i_t (h_t, a_t) =
    \begin{cases}
        0, & z_t = \textsc{Skip}, \\
        \alpha, & z_t = \textsc{Weak}, \\
        1, & z_t = \textsc{Strong},
    \end{cases}
    \label{eq:mapping}
\end{equation}
where $\alpha \in (0,1)$ controls the strength of weak intervention. 
The \textsc{Skip} label suppresses OPD at the current turn, \textsc{Weak} applies partial teacher supervision, and \textsc{Strong} applies full teacher supervision.

We assign $z_t$ through a two-stage verifier-free procedure.
First, we inspect the environment feedback for deterministic interaction failures. 
If the student response produces a clearly invalid transition, such as a failed tool call, missing executable action, or other environment-detectable execution failure, we assign
$z_t = \textsc{Strong}$, i.e., $i_t = 1$.
These cases do not require 
further second-stage judgment,
since the environment already indicates that the current response should be corrected.

For all remaining turns, where the response is executable but its quality is ambiguous, we query the teacher with a proactive intervention-evaluation prompt (see details in \appref{sec:appendix_prompts}). 
Given the interaction history $h_t$, the student response $a_t$, and the available environment feedback, the teacher classifies the turn into exactly one of $\{\textsc{Skip}, \textsc{Weak}, \textsc{Strong}\}$. 
The resulting label is then converted into the intervention weight $i_t$ using the mapping in \eqnref{eq:mapping}. 
Intuitively, \textsc{Skip} is used when the response is acceptable or when multiple strategies are plausible; \textsc{Weak} is used when the response may be suboptimal but the teacher is not fully certain; and \textsc{Strong} is used when the response is clearly detrimental and teacher correction is necessary. 
%

\subsection{Teacher Confidence Weighting}
The intervention label determines whether teacher supervision is needed at a given turn. 
However, even when the teacher identifies a turn as requiring intervention, its token-level supervision may not always be equally reliable. 
Due to compounding errors in multi-turn interaction, the student's previous actions may induce histories that deviate from the teacher's reliable state distribution.
In such cases, the teacher may still recommend intervention, but its predictive distribution can be uncertain. 
We therefore introduce a second factor, denoted as $c_t$, to measure the teacher's confidence in the supervision it provides at turn $t$.

Specifically, for each token position $i$ in the student response $a_t$, let
\begin{equation}
    p^{\star}_{t,i}
    =
    \max_{\mathcal{V}}
    \pi_{\phi}(v \mid h_t, a_{t,<i})
\end{equation}
denote the teacher's top-1 probability over the vocabulary $\mathcal{V}$.
This value measures how decisive the teacher is about its most likely next token at position $i$. 
We define the turn-level teacher confidence score as the average top-1 probability across the student response for this turn, i.e.,
\begin{equation}
    c_t(h_t,a_t)
    =
    \frac{1}{|a_t|}
    \sum_{i=1}^{|a_t|}
    p^{\star}_{t,i}
    \in (0,1].
\end{equation}
A sharp teacher distribution yields a high $c_t$, indicating that the teacher is confident about the token-level supervision for this turn. 
Conversely, a flat distribution yields a low $c_t$, indicating that the teacher itself is uncertain and that the resulting distillation signal should be weakened. 
This confidence term is especially important for later or off-distribution histories, where teacher intervention may be necessary but less reliable.

We combine the turn-level intervention weight $i_t$ with the teacher confidence score $c_t$ to obtain the final \textit{selective OPD weight}:
\begin{equation}
    s_t = i_t \cdot c_t
    \label{eq:selective_opd_weight}
\end{equation}
The same weight $s_t$ is applied uniformly to all token-level OPD loss within turn $t$. 
Note that as $p^{\star}_{t,i}$ is obtained from the same teacher distribution used for the standard distillation loss, computing $c_t$ introduces no additional teacher forward passes beyond the intervention query described in \secref{sec:method_intervention}.
%

\subsection{Loss Normalization}
The selective weight $s_t \in [0,1]$ controls how strongly teacher supervision is applied at each token of the student response in turn $t$. 
However, directly multiplying the OPD loss (\eqnref{eq:opd_loss}) by $s_t$ reduces the overall gradient magnitude relative to standard OPD. 
Without normalization, selective weighting changes both the relative contribution of different turns and the overall magnitude of the training objective.
This makes it difficult to attribute performance differences specifically to selective supervision rather than to a change in optimization scale.
To isolate the contribution of selective weighting, we normalize the loss scale within each batch.

Let $\ell_{\mathrm{OPD}}(h_t,a_t)$ denote the token-level OPD loss for turn $t$, and let $N$ denote the total number of tokens in the student response of the current turn. 
We define the normalized selective OPD objective as
\begin{equation}
    \mathcal{L}_{\ours}
    =
    Z
    \sum_{t=0}^{T-1}
    s_t \,
    \mathcal{L}_{\mathrm{OPD}}(h_t,a_t),
\end{equation}
where the normalization factor is
\begin{equation}
    Z
    =
    \frac{N}
    {
    \max\left(
    \sum_{t=0}^{T-1} s_t |a_t|,
    \epsilon
    \right)
    }.
\end{equation}
Here, $|a_t|$ is the number of student tokens in turn $t$, and $\epsilon$ is a small constant for numerical stability. 
This normalization keeps the average effective token weight close to one, matching the scale of standard OPD while preserving the relative turn-level weighting induced by $s_t$.
As a result, \ours provides a verifier-free turn-level supervision policy that selectively reallocates teacher supervision without merely reducing the overall training signal.

\section{Theoretical Analysis}
\label{sec:theory}
In this section, we provide a theoretical analysis to analyze the role of selective weighting in \ours.
Our goal is not to prove that the teacher intervention gate is always correct, but to show how the proposed weights affect noisy teacher supervision and how the selective objective relates to an ideal oracle intervention objective.
Throughout this section, let $\ell_t(\theta)$ denote the per-turn OPD loss at turn $t$, and let the final selective weight be
\begin{equation}
    s_t = i_t \cdot c_t \in [0,1],
\end{equation}
where $i_t$ is the intervention weight and $c_t$ is the teacher confidence score, as in \eqnref{eq:selective_opd_weight}.
And the normalized selective OPD objective is
\begin{equation}
    \mathcal{L}_{\ours}
    =
    Z
    \sum_{t=0}^{T-1}
    s_t \ell_t(\theta),
    \quad
    \mathrm{where~}
    Z =
    \frac{N}{
    \max\left(
    \sum_{t=0}^{T-1} s_t |a_t|,
    \epsilon
    \right)
    }.
\end{equation}

\paragraph{Selective Reweighting of Supervision Noise.}
We first analyze how selective weighting changes the relative influence of noisy teacher signals.
Suppose the observed per-turn OPD loss decomposes as
\begin{equation}
    \ell_t(\theta)
    =
    \ell_t^\star(\theta) + \xi_t,
\end{equation}
where $\ell_t^\star$ is the ideal supervision loss and $\xi_t$ is zero-mean noise caused by teacher uncertainty, token-level mismatch, or off-distribution histories.
Assume
$
    \mathbb{E}[\xi_t]=0,
    \mathrm{Var}[\xi_t]\leq \sigma_t^2$.

\begin{proposition}[Relative Suppression of Noisy Supervision]
For a fixed batch, define the normalized effective turn weight as
$
    \widetilde{s}_t = Zs_t$.
Then the contribution of turn $t$ to the variance of the noisy component in the normalized selective objective is bounded by
$
    \widetilde{s}_t^2 \sigma_t^2$.
Moreover, for any two turns $p$ and $q$, their relative variance contributions satisfy
\begin{equation}
    \frac{\widetilde{s}_p^2 \sigma_p^2}
         {\widetilde{s}_q^2 \sigma_q^2}
    =
    \frac{s_p^2 \sigma_p^2}
         {s_q^2 \sigma_q^2}.
\end{equation}
\end{proposition}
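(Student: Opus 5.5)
The argument treats everything that is fixed for the batch as deterministic and uses only the second-moment bound on each $\xi_t$. For a fixed batch the selective weights $s_t$, the response lengths $|a_t|$, and the token count $N$ are fixed. Hence $Z$ and every $\widetilde{s}_t = Z s_t$ are deterministic scalars. I will state explicitly that the randomness is only in the noise $\xi_t$, i.e.\ we condition on the batch, and that the gating and confidence scores are measurable with respect to it.

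Substituting $\ell_t = \ell_t^\star + \xi_t$ into $\mathcal{L}_{\ours}$ splits the objective into two parts:
\begin{equation*}
  \mathcal{L}_{\ours} = \sum_{t} \widetilde{s}_t \ell_t^\star(\theta) + \sum_t \widetilde{s}_t \xi_t .
\end{equation*}
The noisy component is therefore $\sum_t \widetilde{s}_t \xi_t$, and the contribution of turn $t$ is the term $\widetilde{s}_t \xi_t$. Since $\widetilde{s}_t$ is a constant, $\mathrm{Var}[\widetilde{s}_t \xi_t] = \widetilde{s}_t^2 \mathrm{Var}[\xi_t] \le \widetilde{s}_t^2 \sigma_t^2$. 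This is the first claim. If the $\xi_t$ are additionally uncorrelated across turns, the total noise variance is $\sum_t \widetilde{s}_t^2 \mathrm{Var}[\xi_t]$, so these per-turn quantities really are additive contributions. I will note this assumption, or alternatively interpret ``contribution'' as the diagonal term, which needs no extra assumption.

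For the ratio, the key step is that the normalizer is shared across turns within the batch. Then $\widetilde{s}_p^2 \sigma_p^2 / (\widetilde{s}_q^2 \sigma_q^2) = Z^2 s_p^2 \sigma_p^2 / (Z^2 s_q^2 \sigma_q^2)$, and $Z^2$ cancels because $Z > 0$. $Z$ is strictly positive since $N > 0$ and the denominator is at least $\epsilon > 0$.

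The main obstacle is not the algebra but stating the hypotheses correctly. First, the ratio is only defined when $s_q \sigma_q \neq 0$, so I will add that proviso; a skipped turn with $s_q = 0$ contributes zero and is handled separately. Second, I must justify treating $Z$ as non-random. If $Z$ depended on $\xi$, the variance computation would fail. It does not, because $Z$ depends only on $s_t$ and the token counts. That independence from $\xi$ is the fact I will emphasize.
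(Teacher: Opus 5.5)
Your proposal is correct and follows the paper's proof essentially verbatim: you write the noisy component as $\sum_t \widetilde{s}_t\xi_t$, bound $\mathrm{Var}[\widetilde{s}_t\xi_t]$ by $\widetilde{s}_t^2\sigma_t^2$, and cancel the common factor $Z^2$ in the ratio. Your added provisos are hypotheses the paper leaves implicit, and they tighten the argument without changing it: conditioning on the batch so that $Z$ is non-random, $Z>0$, and $s_q\sigma_q\neq 0$ for the ratio to be defined.
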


\begin{proof}
The noisy component of the normalized selective objective is
\begin{equation}
    Z\sum_{t=0}^{T-1}s_t\xi_t
    =
    \sum_{t=0}^{T-1}\widetilde{s}_t\xi_t.
\end{equation}
The variance contribution of turn $t$ is therefore
\begin{equation}
    \mathrm{Var}[\widetilde{s}_t\xi_t]
    =
    \widetilde{s}_t^2\mathrm{Var}[\xi_t]
    \leq
    \widetilde{s}_t^2\sigma_t^2.
\end{equation}
For two turns $p$ and $q$, the common normalization factor $Z$ cancels:
\begin{equation}
    \frac{\widetilde{s}_p^2 \sigma_p^2}
         {\widetilde{s}_q^2 \sigma_q^2}
    =
    \frac{Z^2s_p^2 \sigma_p^2}
         {Z^2s_q^2 \sigma_q^2}
    =
    \frac{s_p^2 \sigma_p^2}
         {s_q^2 \sigma_q^2}.
\end{equation}
\end{proof}

\paragraph{Remark.}
This proposition does not claim that the total variance of \ours is always smaller than that of standard OPD.
Because vanilla-scale normalization may produce $Z>1$, the absolute variance of the normalized objective can increase in some batches.
Instead, the result shows that normalization preserves the relative suppression induced by $s_t$: turns assigned smaller selective weights contribute quadratically less noise than turns assigned larger weights.
Thus, when unreliable or unnecessary turns receive smaller $s_t$, their noisy teacher supervision is reduced relative to more reliable intervention turns.

\paragraph{Connection to Oracle Intervention.}
We next relate the selective objective to an ideal oracle intervention objective.
Let
$
    m_t^\star \in \{0,1\}$
denote an unobserved oracle intervention indicator, where $m_t^\star=1$ means that teacher supervision is useful at turn $t$ and $m_t^\star=0$ means that the turn should be skipped.
The corresponding oracle objective is
\begin{equation}
    \mathcal{L}_{\mathrm{oracle}}(\theta)
    =
    \sum_{t=0}^{T-1}
    m_t^\star \ell_t(\theta).
\end{equation}
Our method can be viewed as replacing this binary oracle mask with the verifier-free soft weight $s_t=i_tc_t$.
The following proposition gives a simple error decomposition for this approximation.

\begin{proposition}[Approximation Error of Selective Intervention]
Assume the per-turn OPD loss is bounded:
\begin{equation}
    0 \leq \ell_t(\theta) \leq B.
\end{equation}
Then, for any trajectory, the approximation error of the selective objective relative to the oracle objective is bounded as
\begin{equation}
    \left|
    \sum_{t=0}^{T-1}
    s_t \ell_t(\theta)
    -
    \sum_{t=0}^{T-1}
    m_t^\star \ell_t(\theta)
    \right|
    \leq
    B
    \sum_{t=0}^{T-1}
    |s_t - m_t^\star|.
\end{equation}
\end{proposition}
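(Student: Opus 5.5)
The plan is a direct termwise estimate. First, use linearity of finite sums to merge the two sums in the statement into a single sum:
\begin{equation*}
    \sum_{t=0}^{T-1} s_t \ell_t(\theta) - \sum_{t=0}^{T-1} m_t^\star \ell_t(\theta)
    = \sum_{t=0}^{T-1} (s_t - m_t^\star)\,\ell_t(\theta).
\end{equation*}
This needs nothing beyond the trajectory having finitely many turns $T$. Then apply the triangle inequality to get
\begin{equation*}
    \left|\sum_{t=0}^{T-1} (s_t - m_t^\star)\,\ell_t(\theta)\right|
    \leq \sum_{t=0}^{T-1} |s_t - m_t^\star|\,|\ell_t(\theta)|.
\end{equation*}

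Next, bound each summand using the boundedness assumption $0 \leq \ell_t(\theta) \leq B$. This assumption gives $|\ell_t(\theta)| = \ell_t(\theta) \leq B$. Since $|s_t - m_t^\star| \geq 0$, we have $|s_t - m_t^\star|\,|\ell_t(\theta)| \leq B\,|s_t - m_t^\star|$. Summing over $t$ and factoring out $B$ gives the claimed bound $B\sum_{t}|s_t - m_t^\star|$.

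The argument does not use $s_t \in [0,1]$ or $m_t^\star \in \{0,1\}$. These facts only tell us that each $|s_t - m_t^\star| \leq 1$, so the right-hand side is at most $BT$. I would mention this as a remark rather than a step.

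I do not expect a real obstacle. The one point to check is that the statement concerns the \emph{unnormalized} sum $\sum_t s_t\ell_t$, not $\mathcal{L}_{\ours}$, which includes the factor $Z$. If we wanted a normalized version, the same argument would apply with $\widetilde{s}_t = Z s_t$ in place of $s_t$. That version could not be controlled without separately bounding $Z$, so I would keep the proof at the unnormalized level exactly as stated.
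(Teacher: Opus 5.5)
Your proposal is correct and matches the paper's proof: merge the two sums into $\sum_{t}(s_t - m_t^\star)\ell_t(\theta)$, apply the triangle inequality, and bound $|\ell_t(\theta)| \le B$ termwise. Your side remarks are also accurate: the ranges of $s_t$ and $m_t^\star$ are not needed, and the statement concerns the unnormalized sum rather than the $Z$-scaled objective.
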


\begin{proof}
By the triangle inequality,
\begin{align}
    \left|
    \sum_{t=0}^{T-1}
    s_t \ell_t(\theta)
    -
    \sum_{t=0}^{T-1}
    m_t^\star \ell_t(\theta)
    \right|
    &=
    \left|
    \sum_{t=0}^{T-1}
    (s_t-m_t^\star)\ell_t(\theta)
    \right| \\
    &\leq
    \sum_{t=0}^{T-1}
    |s_t-m_t^\star|\,|\ell_t(\theta)|
    \leq
    B
    \sum_{t=0}^{T-1}
    |s_t-m_t^\star|.
\end{align}
\end{proof}

\paragraph{Remark.}
This proposition should be interpreted as an error decomposition rather than a guarantee that \ours recovers the oracle mask.
It shows that the deviation from oracle selective supervision is controlled by how closely the soft weights $s_t$ approximate the ideal intervention decisions $m_t^\star$.
In \ours, deterministic environment failures provide high-precision positive intervention signals, while teacher judgments approximate the oracle decision for executable but ambiguous turns.
Teacher confidence further softens the intervention weight when the token-level teacher distribution is uncertain.
Therefore, the analysis highlights the intended role of the intervention mechanism: it should identify turns where teacher supervision is both necessary and reliable, rather than applying dense OPD uniformly across the entire trajectory.
\section{Experiments}
\label{sec:experiments}

\subsection{Experimental Setup}
\label{sec:setup}

\paragraph{Models.}
Following prior work on multi-turn OPD~\citep{wang2026tcod}, we use teacher and student models from the same model family to reduce confounding effects from tokenizer, architecture, and instruction-format mismatch.
Specifically, we use Qwen3-32B and Qwen3-8B as teacher models, and Qwen3-0.6B and Qwen3-1.7B as student models~\citep{yang2025qwen3}. 
This yields multiple teacher--student pairs with different capacity gaps, allowing us to evaluate whether \ours remains effective across both large-gap and moderate-gap distillation settings.

\paragraph{Benchmarks and Evaluation Metrics.}
We evaluate \ours on three multi-turn agent benchmarks: ALFWorld~\citep{shridhar2020alfworld}, ScienceWorld~\citep{wang2022scienceworld}, and SearchQA~\citep{jin2025search}. 
These benchmarks cover embodied interaction, scientific task solving, and search-based question answering, respectively, allowing us to assess the effectiveness of selective intervention across different forms of multi-turn decision making.
For ALFWorld, we report success rate (SR) on both the seen and unseen splits. 
For ScienceWorld, we report success rate (SR) and the mean score on the test set. 
And for SearchQA, we report exact match (EM) across different question-answering datasets. 
Detailed definitions of the evaluation metrics are provided in \appref{sec:implementation_details}.

\paragraph{Baselines.}
We compare \ours with four main baselines: off-policy SFT, standard OPD~\citep{lu2025onpolicydistillation}, TCOD-F2B~\citep{wang2026tcod}, and Entropy-Aware OPD~\citep{jin2026entropy}. 
TCOD-F2B stabilizes multi-turn OPD through a forward-to-backward temporal curriculum, and Entropy-Aware OPD augments OPD with entropy-gated KL loss originally designed for single-turn distillation. 
Since \ours uses one additional teacher query to obtain the turn-level intervention label, we include an additional compute-matched baseline, OPD with two rollouts, where both the student and teacher are queried twice during OPD data collection without selective intervention. 
This controls for the additional teacher-query budget.

\paragraph{Training Details.}
We implement OPD with a fully asynchronous training pipeline. 
Unless otherwise specified, all methods are trained for one epoch with learning rate $1\times10^{-6}$ and rollout number 1. 
For \ours, we set the weak-intervention coefficient to $\alpha=0.5$. 
We analyze the sensitivity to this choice in \secref{sec:sensitivity_analysis}. 
Additional implementation details are provided in \appref{sec:implementation_details}.

\subsection{Main Results}
\label{sec:result_analysis}
\begin{table*}[t]
\centering
\caption{
    Main results on ALFWorld. 
    We report success rate (SR) and average number of turns on both seen and unseen splits. 
    Best and second-best SR among distillation methods for each setup is in \textbf{bold} and \underline{underlined}, respectively.
}
\vspace{-0.1in}
\label{tab:main_results_alfworld}
\setlength{\tabcolsep}{14pt}
\resizebox{\textwidth}{!}{%
\begin{tabular}{lllcccc}
\toprule
\textbf{Student Model} & \textbf{Teacher Model} & \textbf{Method}
& \textbf{Seen SR $\uparrow$} & \textbf{Seen Turns $\downarrow$}
& \textbf{Unseen SR $\uparrow$} & \textbf{Unseen Turns $\downarrow$} \\
\midrule
\multicolumn{7}{c}{\textit{Reference Models (Base Inference)}} \\
\midrule
Qwen3-0.6B & -- & Base & 1.43 & 29.65 & 0.75 & 29.82 \\
Qwen3-1.7B & -- & Base & 25.71 & 24.63 & 26.12 & 24.72 \\
-- & Qwen3-8B & Base & 61.43 & 17.95 & 67.16 & 17.24 \\
-- & Qwen3-32B & Base & 57.14 & 18.94 & 69.40 & 18.21 \\
\midrule
\multicolumn{7}{c}{\textit{Student: Qwen3-0.6B, Teacher: Qwen3-8B}} \\
\midrule
\multirow{6}{*}{Qwen3-0.6B} & \multirow{6}{*}{Qwen3-8B} & SFT & 2.86 & 29.32 & 4.48 & 28.87 \\
 & & OPD & 55.00 & 18.53 & 56.72 & 20.30 \\
 & & OPD (rollout 2) & \underline{56.43} & 18.76 & 59.70 & 20.31 \\
 & & Entropy-Aware OPD & 54.26 & 18.12 & 54.48 & 20.75 \\
 & & TCOD-F2B & \underline{56.43} & 18.51 & \underline{61.19} & 18.82 \\
 & & \ours{} & \textbf{57.86} & 18.46 & \textbf{61.94} & 18.89 \\
\midrule
\multicolumn{7}{c}{\textit{Student: Qwen3-1.7B, Teacher: Qwen3-8B}} \\
\midrule
\multirow{6}{*}{Qwen3-1.7B} & \multirow{6}{*}{Qwen3-8B} & SFT & 25.00 & 24.58 & 26.12 & 24.54 \\
 & & OPD & 60.00 & 17.80 & 61.94 & 17.67 \\
 & & OPD (rollout 2) & 63.00 & 17.51 & 64.93 & 17.87 \\
 & & Entropy-Aware OPD & 63.57 & 17.28 & \underline{65.67} & 18.20 \\
 & & TCOD-F2B & \textbf{64.29} & 17.65 & 58.21 & 19.16 \\
 & & \ours{} & \textbf{64.29} & 18.14 & \textbf{70.15} & 17.86 \\
\midrule
\multicolumn{7}{c}{\textit{Student: Qwen3-0.6B, Teacher: Qwen3-32B}} \\
\midrule
\multirow{5}{*}{Qwen3-0.6B} & \multirow{5}{*}{Qwen3-32B} & OPD & 53.57 & 19.49 & 68.66 & 18.65 \\
 & & OPD (rollout 2) & 52.14 & 20.28 & \underline{70.37} & 18.21 \\
 & & Entropy-Aware OPD & 48.57 & 20.87 & 68.13 & 18.28 \\
 & & TCOD-F2B & \textbf{54.29} & 20.18 & 67.91 & 19.26 \\
 & & \ours{} & \textbf{54.29} & 19.85 & \textbf{70.90} & 17.98 \\
\midrule
\multicolumn{7}{c}{\textit{Student: Qwen3-1.7B, Teacher: Qwen3-32B}} \\
\midrule
\multirow{5}{*}{Qwen3-1.7B} & \multirow{5}{*}{Qwen3-32B} & OPD & 58.57 & 18.89 & \underline{74.85} & 17.16 \\
 & & OPD (rollout 2) & \underline{62.14} & 18.17 & 72.39 & 18.37 \\
 & & Entropy-Aware OPD & 60.00 & 18.80 & \textbf{79.10} & 16.75 \\
 & & TCOD-F2B & 57.14 & 19.42 & 73.13 & 17.87 \\
 & & \ours{} & \textbf{62.82} & 18.75 & 73.88 & 17.57 \\
\bottomrule
\end{tabular}%
}
\end{table*}
\begin{table*}[t]
\centering
\caption{
    Main results on ScienceWorld. 
    We report success rate (SR), mean task score, and average number of turns. 
    Best and second-best SR among distillation methods for each setup is in \textbf{bold} and \underline{underlined}, respectively.
}
\vspace{-0.1in}
\label{tab:main_results_scienceworld}
\setlength{\tabcolsep}{22pt}
\resizebox{\textwidth}{!}{%
\begin{tabular}{lllccc}
\toprule
\textbf{Student Model} & \textbf{Teacher Model} & \textbf{Method}
& \textbf{SR $\uparrow$} & \textbf{Mean Score $\uparrow$} & \textbf{Turns $\downarrow$} \\
\midrule
\multicolumn{6}{c}{\textit{Reference Models (Base Inference)}} \\
\midrule
Qwen3-0.6B & -- & Base & 3.36 & -43.51 & 21.38 \\
Qwen3-1.7B & -- & Base & 3.36 & -24.00 & 21.38 \\
-- & Qwen3-8B & Base & 10.74 & -16.09 & 21.67 \\
-- & Qwen3-32B & Base & 15.44 & -17.92 & 18.67 \\
\midrule
\multicolumn{6}{c}{\textit{Student: Qwen3-0.6B, Teacher: Qwen3-8B}} \\
\midrule
\multirow{5}{*}{Qwen3-0.6B} & \multirow{5}{*}{Qwen3-8B} & SFT & 2.01 & -31.84 & 23.94 \\
 & & OPD & 2.01 & -12.88 & 23.94 \\
 & & Entropy-Aware OPD & \textbf{3.36} & -11.52 & 23.44 \\
 & & TCOD-F2B & \underline{3.03} & -2.07 & 24.88 \\
 & & \ours{} & 2.01 & \textbf{+3.89} & 26.94 \\
\midrule
\multicolumn{6}{c}{\textit{Student: Qwen3-1.7B, Teacher: Qwen3-8B}} \\
\midrule
\multirow{5}{*}{Qwen3-1.7B} & \multirow{5}{*}{Qwen3-8B} & SFT & 2.01 & -17.56 & 23.28 \\
 & & OPD & 4.70 & -2.78 & 25.12 \\
 & & Entropy-Aware OPD & 3.36 & -3.13 & 25.91 \\
 & & TCOD-F2B & 4.70 & -9.68 & 24.06 \\
 & & \ours{} & \textbf{6.71} & \textbf{+0.24} & 25.30 \\
\midrule
\multicolumn{6}{c}{\textit{Student: Qwen3-0.6B, Teacher: Qwen3-32B}} \\
\midrule
\multirow{5}{*}{Qwen3-0.6B} & \multirow{5}{*}{Qwen3-32B} & SFT & 2.01 & -34.59 & 18.10 \\
 & & OPD & 3.36 & -28.46 & 20.79 \\
 & & Entropy-Aware OPD & 6.04 & -26.10 & 20.00 \\
 & & TCOD-F2B & 2.68 & -33.42 & 23.20 \\
 & & \ours{} & \textbf{6.04} & \textbf{-26.10} & 20.00 \\
\midrule
\multicolumn{6}{c}{\textit{Student: Qwen3-1.7B, Teacher: Qwen3-32B}} \\
\midrule
\multirow{5}{*}{Qwen3-1.7B} & \multirow{5}{*}{Qwen3-32B} & SFT & 2.01 & -15.61 & 23.23 \\
 & & OPD & 4.03 & -29.89 & 19.64 \\
 & & Entropy-Aware OPD & 3.36 & -14.99 & 23.59 \\
 & & TCOD-F2B & \underline{6.04} & \textbf{-14.89} & 22.02 \\
 & & \ours{} & \textbf{7.38} & -16.01 & 20.91 \\
\bottomrule
\end{tabular}%
}
\end{table*}
\begin{table*}[htbp]
\centering
\caption{
    Main results on SearchQA. 
    We report exact match (EM) across seven question-answering datasets and their average. 
    Best and second-best SR among distillation methods for each setup is in \textbf{bold} and \underline{underlined}, respectively.
}
\vspace{-0.1in}
\label{tab:main_results_searchqa}
\setlength{\tabcolsep}{4pt}
\resizebox{\textwidth}{!}{%
\begin{tabular}{lllcccccccc}
\toprule
\textbf{Student Model} & \textbf{Teacher Model} & \textbf{Method}
& \textbf{NQ $\uparrow$} & \textbf{TriviaQA $\uparrow$} & \textbf{PopQA $\uparrow$} & \textbf{HotpotQA $\uparrow$}
& \textbf{2WikiMQA $\uparrow$} & \textbf{Musique $\uparrow$} & \textbf{Bamboogle $\uparrow$} & \textbf{Avg. $\uparrow$} \\
\midrule
\multicolumn{11}{c}{\textit{Reference Models (Base Inference)}} \\
\midrule
Qwen3-0.6B & -- & Base & 5.60 & 15.40 & 0.40 & 6.00 & 8.40 & 1.20 & 3.20 & 5.74 \\
Qwen3-1.7B & -- & Base & 24.20 & 36.40 & 5.20 & 19.00 & 14.20 & 10.80 & 18.40 & 18.31 \\
-- & Qwen3-8B & Base & 34.60 & 53.00 & 14.00 & 38.20 & 37.40 & 28.80 & 45.60 & 35.94 \\
-- & Qwen3-14B & Base & 35.00 & 55.80 & 15.60 & 41.20 & 43.00 & 30.00 & 49.60 & 38.60 \\
\midrule
\multicolumn{11}{c}{\textit{Student: Qwen3-0.6B, Teacher: Qwen3-8B}} \\
\midrule
\multirow{5}{*}{Qwen3-0.6B} & \multirow{5}{*}{Qwen3-8B} & SFT & 14.40 & 23.60 & 5.80 & 13.20 & 11.69 & 2.23 & 6.40 & 11.05 \\
 & & OPD & 25.00 & 41.40 & 17.80 & 28.20 & 24.80 & 17.00 & 24.00 & 25.46 \\
 & & Entropy-Aware OPD & \textbf{25.60} & 40.00 & \textbf{18.80} & 28.80 & \textbf{25.60} & \underline{17.40} & \textbf{24.40} & \textbf{25.80} \\
 & & TCOD-F2B & \underline{25.40} & 41.00 & 15.60 & \underline{29.00} & 24.60 & 16.80 & 23.20 & 25.09 \\
 & & \ours{} & 25.20 & \textbf{41.60} & \underline{18.40} & \textbf{29.80} & 23.20 & 15.00 & \underline{24.00} & \underline{25.31} \\
\midrule
\multicolumn{11}{c}{\textit{Student: Qwen3-1.7B, Teacher: Qwen3-8B}} \\
\midrule
\multirow{5}{*}{Qwen3-1.7B} & \multirow{5}{*}{Qwen3-8B} & SFT & 27.40 & 39.60 & 11.60 & 21.20 & 8.80 & 12.00 & 20.00 & 20.09 \\
 & & OPD & 30.60 & 47.20 & 12.40 & 33.20 & 29.60 & \textbf{24.80} & 32.80 & 30.09 \\
 & & Entropy-Aware OPD & 29.40 & \underline{48.20} & 12.60 & 33.20 & \underline{30.20} & 23.80 & \textbf{36.80} & \underline{30.60} \\
 & & TCOD-F2B & \textbf{31.20} & 47.60 & 12.40 & \textbf{33.80} & 28.60 & 24.00 & \underline{36.00} & 30.51 \\
 & & \ours{} & \underline{30.00} & \textbf{48.40} & \textbf{12.80} & 33.20 & \textbf{30.40} & \underline{24.60} & \underline{36.00} & \textbf{30.77} \\
\bottomrule
\end{tabular}%
}
\end{table*}
We present the main results on ALFWorld, ScienceWorld, and SearchQA in \tabref{tab:main_results_alfworld}, \ref{tab:main_results_scienceworld}, and \ref{tab:main_results_searchqa}, respectively.
Across all three benchmarks, \ours substantially improves over base student models and off-policy SFT. 
On ALFWorld, Qwen3-0.6B improves massively from $1.43/0.75$ (seen/unseen splits) SR to $57.86/61.94$ with a Qwen3-8B teacher, while reducing the average number of turns from nearly 30 to under 20. 
Compared with standard OPD, \ours further improves Qwen3-0.6B from $55.00$ to $57.86$ seen SR and from $56.72$ to $61.94$ unseen SR, while improving Qwen3-1.7B unseen SR from $61.94$ to $70.15$. 
Notably, these gains are not merely due to extra teacher queries, as \ours also outperforms the compute-matched OPD rollout-$2$ baseline in most ALFWorld settings, with comparable or fewer turns on the unseen split.
%

Compared with TCOD-F2B and Entropy-Aware OPD, \ours is strongest on ALFWorld and remains competitive on ScienceWorld and SearchQA. 
On ALFWorld, \ours achieves the best unseen SR in three out of four student--teacher settings and ties for the best seen SR in two settings, while generally maintaining similar turn efficiency to the strongest baselines. 
On SearchQA, \ours obtains the best average EM for Qwen3-1.7B distilled from Qwen3-8B, improving over OPD from $30.09$ to $30.77$. 
On ScienceWorld, \ours achieves the best SR for Qwen3-1.7B with both Qwen3-8B and Qwen3-32B teachers, and improves the Qwen3-0.6B mean score with Qwen3-8B from $-12.88$ under OPD to $+3.89$, although this comes with a higher average number of turns, suggesting more sustained task exploration.

\paragraph{Effects of Student and Teacher Scale.}
We notice that the benefits of \ours are more consistent for the larger Qwen3-1.7B student, which achieves strong gains on ALFWorld, the best average SearchQA result, and the best ScienceWorld SR across both teacher settings. 
For Qwen3-0.6B, OPD already provides large gains over SFT and base inference, while \ours further improves ALFWorld generalization but shows more task-dependent effects on ScienceWorld and SearchQA. 
We also observe that merely increasing teacher size is not uniformly beneficial, we hypothesize that stronger teacher signals may not be always more reliable when applied densely and that adaptive weighting is important for multi-turn OPD.

\begin{figure*}[t]
    \centering
    \includegraphics[width=\textwidth]{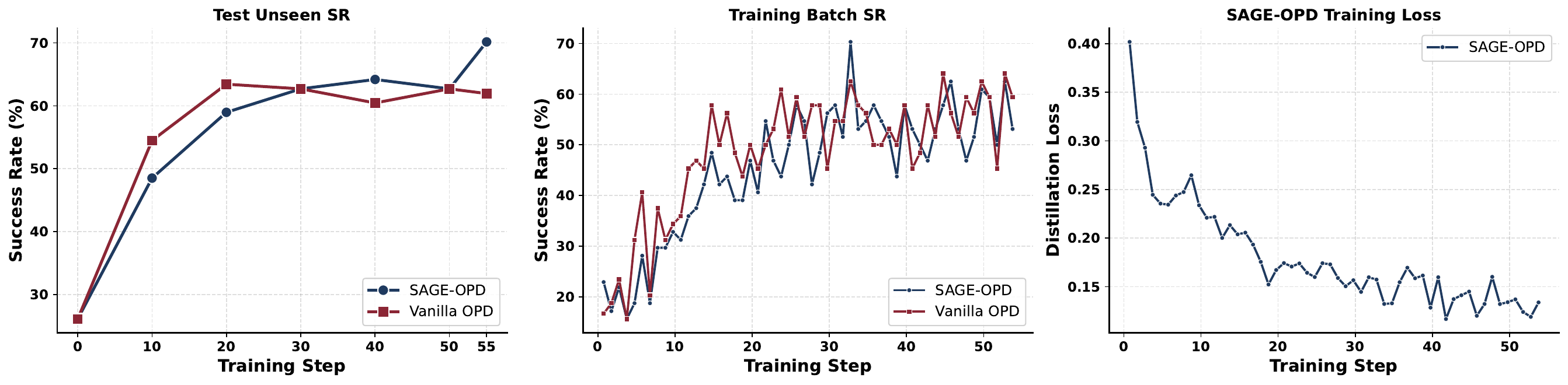}
    \caption{Training dynamics on ALFWorld comparing \ours{} with vanilla OPD. Left: test unseen success rate during training. Middle: training-batch success rate. Right: training loss of \ours{}.}
    \label{fig:training_dynamics}
\end{figure*}

\subsection{Training Dynamics}

\figref{fig:training_dynamics} compares the training dynamics of \ours and standard OPD~\citep{lu2025onpolicydistillation} on ALFWorld~\citep{shridhar2020alfworld}. 
On the unseen test split (\figref{fig:training_dynamics}-\textit{left}), both methods improve rapidly in the early stage, but standard OPD saturates after roughly 20--30 training steps and fluctuates around $60$--$63\%$ success rate. 
In contrast, \ours continues to improve in the later stage and reaches $70.15\%$ unseen success rate by the end of training. 
This suggests that selective intervention does not merely accelerate early learning, but also helps maintain useful teacher supervision after standard dense OPD begins to plateau.
%

The training-batch success rate (\figref{fig:training_dynamics}-\textit{middle}) shows the two methods tracking each other closely throughout training, so selective supervision fits the on-policy training distribution about as well as dense OPD while intervening on only a subset of turns.
This contrast (comparable training-batch success but a widening gap on the unseen split) shows that selective intervention improves generalization rather than training-distribution fit.
\figref{fig:training_dynamics}-\textit{right} further shows that the \ours distillation loss decreases steadily over training, indicating stable optimization under the intervention-weighted objective. 
Together, these dynamics support our hypothesis that selectively reallocating teacher supervision can improve generalization while preserving stable on-policy training.
\section{Ablation Studies and Analysis}
\label{sec:ablation_studies}

\begin{table*}[t]
\centering
\caption{
    Ablation study on ALFWorld. 
    We incrementally evaluate the contribution of turn-level intervention, teacher confidence weighting, and vanilla-scale normalization. 
    Best and second-best SR among distillation methods for each setup is in \textbf{bold} and \underline{underlined}, respectively.
}
\vspace{-0.1in}
\label{tab:ablation_alfworld}
\setlength{\tabcolsep}{12pt}
\resizebox{\textwidth}{!}{%
\begin{tabular}{lllcccc}
\toprule
\textbf{Student Model} & \textbf{Teacher Model} & \textbf{Method}
& \textbf{Seen SR $\uparrow$} & \textbf{Seen Turns $\downarrow$}
& \textbf{Unseen SR $\uparrow$} & \textbf{Unseen Turns $\downarrow$} \\
\midrule
\multicolumn{7}{c}{\textit{Student: Qwen3-0.6B, Teacher: Qwen3-8B}} \\
\midrule
\multirow{4}{*}{Qwen3-0.6B} & \multirow{4}{*}{Qwen3-8B}
& OPD & 55.00 & 18.53 & 56.72 & 20.30 \\
& & Intervention-only & \underline{57.86} & 17.80 & \textbf{62.69} & 18.49 \\
& & NoNorm & 55.71 & 19.19 & 55.22 & 20.50 \\
& & \ours{} & \textbf{57.86} & 18.46 & \underline{61.94} & 18.89 \\
\midrule
\multicolumn{7}{c}{\textit{Student: Qwen3-1.7B, Teacher: Qwen3-8B}} \\
\midrule
\multirow{4}{*}{Qwen3-1.7B} & \multirow{4}{*}{Qwen3-8B}
& OPD & 60.00 & 17.80 & 61.94 & 17.67 \\
& & Intervention-only & 59.29 & 18.39 & \underline{66.42} & 17.68 \\
& & NoNorm & \textbf{66.43} & 17.15 & 64.93 & 18.51 \\
& & \ours{} & \underline{64.29} & 18.14 & \textbf{70.15} & 17.86 \\
\midrule
\multicolumn{7}{c}{\textit{Student: Qwen3-0.6B, Teacher: Qwen3-32B}} \\
\midrule
\multirow{4}{*}{Qwen3-0.6B} & \multirow{4}{*}{Qwen3-32B}
& OPD & 53.57 & 19.49 & 68.66 & 18.65 \\
& & Intervention-only & 53.86 & 19.21 & 67.16 & 19.40 \\
& & NoNorm & \textbf{57.86} & 19.61 & \underline{69.40} & 18.10 \\
& & \ours{} & \underline{54.29} & 19.85 & \textbf{70.90} & 17.98 \\
\midrule
\multicolumn{7}{c}{\textit{Student: Qwen3-1.7B, Teacher: Qwen3-32B}} \\
\midrule
\multirow{4}{*}{Qwen3-1.7B} & \multirow{4}{*}{Qwen3-32B}
& OPD & 58.57 & 18.89 & \underline{74.85} & 17.16 \\
& & Intervention-only & \underline{62.14} & 18.29 & \textbf{79.10} & 16.12 \\
& & NoNorm & 61.43 & 18.39 & 73.13 & 17.14 \\
& & \ours{} & \textbf{62.82} & 18.75 & 73.88 & 17.57 \\
\bottomrule
\end{tabular}%
}
\end{table*}

\begin{figure*}[t]
    \centering
    \includegraphics[width=0.9\textwidth]{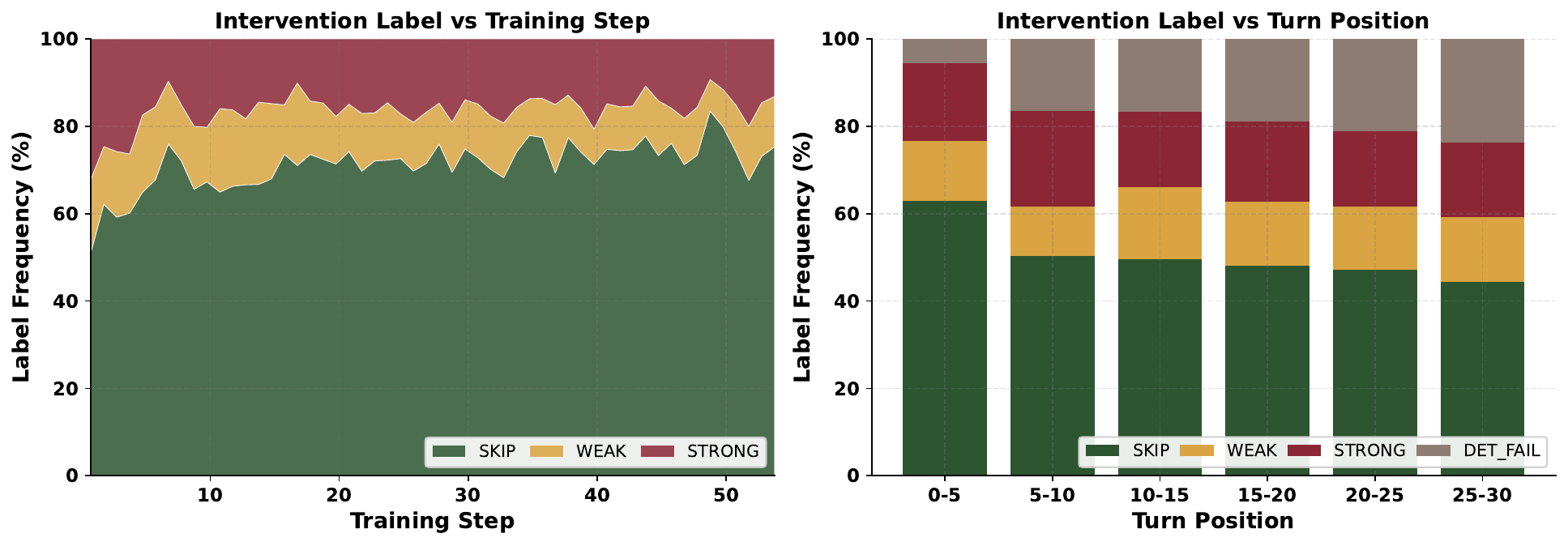}
    \caption{Intervention label distribution of \ours{} on ALFWorld. Left: label frequency across training steps. Right: label frequency across turn positions. \textsc{Det-Fail} denotes deterministic failures detected before teacher intervention, such as malformed actions, failed tool calls, missing executable actions, or environment-reported execution failures.}
    \label{fig:intervention_analysis}
\end{figure*}

\subsection{Intervention Analysis}

We first analyze the intervention labels (as in \eqnref{eq:intervention_labels}) produced by \ours across training steps and turn positions. 
In addition to the teacher-assigned labels \textsc{Skip}, \textsc{Weak}, and \textsc{Strong}, we also report \textsc{Det-Fail}, which corresponds to the deterministic-failure short circuit described in \secref{sec:method_intervention}. 
These are turns where the environment or action parser provides clear negative feedback, such as malformed actions, failed tool calls, missing executable actions, or execution failures. 
Such turns are directly assigned full intervention without requiring additional teacher judgment.

As shown in \figref{fig:intervention_analysis}-\textit{left}, as training progresses, the proportion of \textsc{Skip} labels generally increases, while \textsc{Weak} and \textsc{Strong} labels decrease. 
This suggests that the student produces more acceptable actions over time, and the intervention mechanism correspondingly reduces unnecessary teacher supervision. 
The turn-position analysis (\figref{fig:intervention_analysis}-\textit{right}) shows that \textsc{Skip} remains the dominant label across the trajectory, but gradually decreases as the turn index increases. 
Meanwhile, \textsc{Det-Fail} becomes more frequent in later turns, indicating that response quality becomes harder to maintain as the interaction horizon grows. 
This aligns with the compounding-error challenge in multi-turn agents. \ours will selectively skip acceptable turns while preserving full intervention for clear execution failures, which become more frequent as interaction histories grow longer.

\subsection{Ablation Studies}

We conduct ablation studies on ALFWorld to isolate the contribution of each component in \ours.
We conduct an incremental comparison among three variants: \textit{Intervention-only}, which uses only the turn-level intervention weight $i_t$; \textit{NoNorm}, which further incorporates teacher confidence weighting $c_t$; and the full \ours objective, which additionally applies loss normalization.
As shown in \tabref{tab:ablation_alfworld}, the \textit{Intervention-only} variant already improves over this anchor in several settings, increasing unseen SR from $56.72$ to $62.69$ for Qwen3-0.6B distilled from Qwen3-8B and from $68.66$ to $79.10$ for Qwen3-1.7B distilled from Qwen3-32B.
%
This suggests that turn-level intervention labels can identify useful correction points even without requiring a task-level verifier.

We also notice that adding confidence weighting (the \textit{NoNorm} variant) improves some settings but can also reduce unseen generalization when the selective weights shrink the effective training signal.
Loss normalization addresses this issue by preserving the overall loss scale while retaining the relative turn-level weights.
For example, with Qwen3-1.7B distilled from Qwen3-8B, normalization improves unseen SR from $64.93$ under \textit{NoNorm} to $70.15$ under the full \ours objective; with Qwen3-0.6B distilled from Qwen3-32B, it improves unseen SR from $69.40$ to $70.90$.
Overall, the results show that intervention weighting, teacher confidence, and loss-scale normalization play complementary roles in selective multi-turn OPD.

\subsection{Sensitivity Analysis}
\label{sec:sensitivity_analysis}

\begin{table}[t]
\centering
\caption{
    Sensitivity analysis on ALFWorld using Qwen3-8B as teacher and Qwen3-1.7B as student. 
    The default configuration uses $\alpha=0.5$, teacher temperature $0.0$, and standard loss normalization over all student tokens.
}
\vspace{-0.1in}
\label{tab:sensitivity}
\setlength{\tabcolsep}{20pt}
\resizebox{\textwidth}{!}{%
\begin{tabular}{llcccc}
\toprule
\textbf{Type} & \textbf{Setting}
& \textbf{Seen SR $\uparrow$} & \textbf{Seen Turns $\downarrow$}
& \textbf{Unseen SR $\uparrow$} & \textbf{Unseen Turns $\downarrow$} \\
\midrule
\multirow{4}{*}{$\alpha$}
& $0.3$ & 63.57 & 17.34 & 67.16 & 16.78 \\
& $0.5$ \;(\textit{default}) & \textbf{64.29} & 18.14 & \textbf{70.15} & 17.86 \\
& $0.7$ & 57.86 & 18.86 & 66.42 & 18.47 \\
& $1.0$ & 58.57 & 19.31 & 61.94 & 19.02 \\
\midrule
\multirow{5}{*}{Teacher temp.}
& $0.0$ \;(\textit{default}) & \textbf{64.29} & 18.14 & \textbf{70.15} & 17.86 \\
& $0.3$ & 60.00 & 18.61 & 62.69 & 18.35 \\
& $0.7$ & 60.71 & 18.77 & 65.67 & 18.34 \\
& $1.0$ & 60.71 & 18.24 & 63.43 & 18.11 \\
& $1.5$ & 57.86 & 19.24 & 63.43 & 18.74 \\
\midrule
Normalization
& All-token norm. \;(\textit{default}) & \textbf{64.29} & 18.14 & \textbf{70.15} & 17.86 \\
& Active-token norm. & 62.14 & 18.41 & 68.66 & 17.46 \\
\bottomrule
\end{tabular}%
}
\end{table}

We analyze the sensitivity of \ours in terms of teacher intervention weight $\alpha$ (as in \eqnref{eq:mapping}), teacher temperature, and normalization method, on ALFWorld using Qwen3-8B as teacher and Qwen3-1.7B as student.
As shown in \tabref{tab:sensitivity}, the default weak-intervention weight $\alpha=0.5$ achieves the best seen and unseen SR.
A smaller value, $\alpha=0.3$, remains competitive but underperforms on the unseen split, suggesting that weak interventions still provide useful supervision.
Larger values, such as $\alpha=0.7$ and $\alpha=1.0$, degrade performance and increase the average number of turns.
When $\alpha$ approaches $1$, weak interventions become similar to strong interventions, making the objective closer to standard dense OPD on ambiguous turns.
This suggests that ambiguous turns benefit from partial rather than full teacher correction, supporting our use of a moderate weak-intervention weight.

We also vary the teacher temperature used for confidence estimation.
The default deterministic setting, temperature $0.0$, performs the best, while higher temperatures consistently reduce success rate.
This is expected because the confidence score is derived from the teacher's predictive distribution, and higher temperatures can make this signal less stable for turn-level weighting.
Finally, we compare an alternative normalization that counts only active, non-skipped tokens in $N$ rather than all student tokens.
%
This variant remains competitive, reaching $68.66\%$ unseen SR, but underperforms the default vanilla-scale normalization at $70.15\%$.
Overall, these results support using a moderate weak-intervention weight, deterministic teacher confidence, and normalization over all student tokens as the default configuration.

\section{Conclusion}
We introduced \ours, a verifier-free selective intervention framework for multi-turn on-policy distillation.
Instead of applying teacher supervision uniformly across all turns, \ours first estimates whether each student response requires intervention and then weights token-level distillation by teacher confidence, addressing both compounding errors across interaction steps and brittle token-level supervision in ambiguous states. 
Experiments on ALFWorld, ScienceWorld, and SearchQA show that \ours improves over base students, off-policy SFT, and standard OPD while remaining competitive with strong OPD variants. 
Ablation and training-dynamics analysis further demonstrate that intervention labels, teacher confidence weighting, and vanilla-scale normalization provide complementary benefits. 
Overall, our results suggest that effective multi-turn OPD should be on-policy but not necessarily token-level dense: teacher supervision should be selectively allocated to turns where intervention is necessary and reliable.

\clearpage
\newpage
\bibliographystyle{assets/plainnat}
\bibliography{custom}

\clearpage
\newpage
\beginappendix
\appendix
\section*{Appendix}

\section{Implementation Details}
\label{sec:implementation_details}

We implement \ours{} and all baselines on top of the \texttt{verl} fully-asynchronous pipeline, using a decoupled rollouter--trainer architecture with a staleness-bounded queue. We set \texttt{trigger\_parameter\_sync\_step} to $4$ and \texttt{staleness\_threshold} to $0.5$. We use AdamW with a learning rate $10^{-6}$, weight decay $0.1$, response length $4096$, rollout number $n=1$, global batch size 64, and epoch number 1. 

The maximum number of turns is $30$ for ALFWorld and ScienceWorld and $16$ for SearchQA.
The maximum generation length is $4096$ tokens per turn.
We use temperature $0.4$, top-$p=1.0$, and top-$k=-1$.
All evaluations use the ReAct prompt format with full chat history, no sliding window, thinking mode disabled, and \texttt{</action>} as the stop string.
For SearchQA, the agent is allowed at most $8$ search calls per episode and retrieves the top $3$ documents from a Wikipedia-2018 index using dense E5 retrieval with FAISS-Flat.
We evaluate on the standard ALFWorld valid-seen and valid-unseen splits, the ScienceWorld test set with $30$ task types and $5$ variations per task, and capped SearchQA splits with up to $500$ questions per split except Bamboogle, which contains $125$ questions.

For agentic OPD baselines, we keep the OPD backbone fixed and modify only the per-token or per-turn weighting scheme. TCOD-F2B truncates each rollout after
\begin{equation}
    k(n) = \min\left(k_{\mathrm{start}} + \frac{n}{\eta}, T\right)
\end{equation}
assistant turns. We set $k_{\mathrm{start}}=1$ and $\eta=1$. The maximum horizon is $T=30$ for ALFWorld and ScienceWorld, and $T=16$ for SearchQA, so the curriculum reaches the cap within one epoch. Entropy-Aware OPD scales the per-token weight by
\begin{equation}
    1 + \alpha_e \cdot \mathbf{1}
    \left[
    H_{\phi}(\cdot \mid h_t,a_{t,<j}) > \tau_e
    \right],
\end{equation}
where $H_{\phi}$ is estimated from the cached top-$K$ teacher distribution. We use $\tau_e=0.8$ and $\alpha_e=1.0$.

For \ours{}, the selective weight is defined as
\begin{equation}
    s_t = i_t \cdot c_t.
\end{equation}
The default per-turn intervention weight $i_t \in \{0,0.5,1\}$ ($\alpha=0.5$) is obtained from one short teacher query per assistant turn, using maximum generation length $1028$ and default temperature $0.0$. If the teacher output is unparseable, we conservatively fall back to $i_t=1$. We also retain the deterministic-failure short circuit: if the student turn is empty, unparseable, or violates the required action schema, we directly set $i_t=1$ without relying on the teacher intervention query.

We use the training split of each benchmark for training. For SearchQA \citep{jin2025search}, we randomly sample 10K examples from the training split as the training subset.

\paragraph{Evaluation Metrics.}
We use benchmark-specific metrics following the standard evaluation protocols.
For ALFWorld, we report success rate (SR) on the valid-seen and valid-unseen splits, where an episode is counted as successful if the environment returns \texttt{info["won"] = True} within the turn budget.
For ScienceWorld, we report both SR and mean score.
SR counts episodes whose cumulative score reaches $100$, while mean score averages the environment-provided cumulative reward over episodes.
Since ScienceWorld provides partial credit and small models rarely solve tasks end-to-end, mean score is a more sensitive measure of partial progress and catastrophic failures.
For SearchQA, we report exact match (EM), where the final answer is counted as correct if its normalized string matches any reference answer.
We follow the SearchR1 normalization protocol, which lowercases text, removes punctuation and articles, and collapses whitespace.
For SearchQA, we report EM on each evaluation split and macro-average across NQ, TriviaQA, PopQA, HotpotQA, 2WikiMultiHopQA, Musique, and Bamboogle.
We also report average number of turns for ALFWorld and ScienceWorld, and average search/turn statistics for SearchQA as auxiliary efficiency metrics.

\section{Supplementary Results}
\label{sec:supplementary_results}

\subsection{Case Study: Turn-Level Intervention Examples}
\label{sec:case_study}

To qualitatively examine how \ours{} allocates teacher supervision, we present representative ALFWorld cases for different intervention outcomes.
These examples illustrate that the intervention mechanism does not simply supervise every student response uniformly.
Instead, it distinguishes between acceptable actions, mildly suboptimal actions, clearly harmful actions, and deterministic execution failures.

\paragraph{Skip: acceptable student behavior.}
When the student response is reasonable under the current observation, \ours{} assigns \textsc{Skip} and suppresses the OPD loss for that turn.
For example, in a \texttt{pick\_two\_obj\_and\_place} task, the student observes the following environment state:
\begin{quote}
\small
\texttt{You arrive at coffeetable 1. On the coffeetable 1, you see a box 2, a creditcard 2, a creditcard 1, a houseplant 1, a plate 2, a plate 1, a statue 1, and a wateringcan 1.}
\end{quote}
The student responds:
\begin{quote}
\small
\texttt{Thought: I need to find two vases and place them on the coffeetable.}\\
\texttt{Action: look}
\end{quote}
The parsed action is \texttt{look}, and the teacher assigns \textsc{Skip} with intervention weight $i_t=0$.
Although the action does not immediately complete the task, it is a valid exploratory step under the current observation.
In another \texttt{pick\_clean\_then\_place\_in\_recep} task, the student similarly takes an exploratory action:
\begin{quote}
\small
\texttt{Thought: I need to examine the cabinet to find a knife.}\\
\texttt{Action: examine cabinet 1}
\end{quote}
The environment returns that the cabinet contains nothing, but the action itself is executable and plausible, so the teacher again assigns \textsc{Skip}.
These cases show that \ours{} avoids unnecessary token-level correction when the student is making plausible progress.

\paragraph{Weak: executable but suboptimal behavior.}
When the student action is valid but potentially inefficient or only partially aligned with the task, \ours{} assigns \textsc{Weak}.
For instance, in a \texttt{pick\_clean\_then\_place\_in\_recep} task, the student sees the same coffeetable observation and produces:
\begin{quote}
\small
\texttt{Thought: I need to find a clean kettle in the cabinet.}\\
\texttt{Action: examine cabinet 1}
\end{quote}
The action \texttt{examine cabinet 1} is executable and may still provide useful information, but it is not clearly the best next step for the task.
The teacher therefore assigns \textsc{Weak} with intervention weight $i_t=\alpha$.
As another example, in a \texttt{pick\_and\_place\_simple} task, the student reasons:
\begin{quote}
\small
\texttt{Thought: The cloth is not on the countertop. I'll check the cabinets to find a cloth.}\\
\texttt{Action: go to cabinet 1}
\end{quote}
The environment accepts the action and moves the agent to the closed cabinet, but the decision remains uncertain rather than clearly optimal.
The teacher again assigns \textsc{Weak}.
These examples demonstrate that \ours{} can provide partial supervision for ambiguous turns rather than forcing a binary choice between full imitation and no correction.

\paragraph{Strong: clearly harmful or inconsistent behavior.}
When the student response is clearly inconsistent with the task state or likely to push the trajectory in an unhelpful direction, \ours{} assigns \textsc{Strong}.
For example, in a \texttt{pick\_cool\_then\_place\_in\_recep} task, the student states that it should cool a plate in the fridge, but then navigates to a cabinet:
\begin{quote}
\small
\texttt{Thought: I need to find a plate and cool it in the fridge before putting it in the cabinet.}\\
\texttt{Action: go to cabinet 1}
\end{quote}
This response is internally inconsistent with the required cooling procedure, because moving to a cabinet does not help complete the cooling step.
The teacher assigns \textsc{Strong} with intervention weight $i_t=1$.
A related failure occurs when the student repeats the same high-level intent but selects another cabinet-navigation action:
\begin{quote}
\small
\texttt{Thought: I need to find a plate and cool it in the fridge before putting it in the cabinet.}\\
\texttt{Action: go to cabinet 10}
\end{quote}
Although the action is executable, it is unlikely to make progress toward cooling the object and may cause the trajectory to drift further from a successful plan.
These cases show that strong intervention is used when the current turn requires clear correction.

\paragraph{Deterministic failure: environment-detected execution error.}
In addition to teacher-judged intervention, \ours{} directly assigns full intervention when the environment or parser detects a deterministic failure.
For example, in a \texttt{pick\_and\_place\_simple} task, the student attempts:
\begin{quote}
\small
\texttt{Thought: I need to place the book on the sofa. I'll put the book on the sofa now.}\\
\texttt{Action: put book 1 in/on sofa 1}
\end{quote}
The parsed action is \texttt{put book 1 in/on sofa 1}, but the environment returns:
\begin{quote}
\small
\texttt{Nothing happens.}
\end{quote}
Similarly, in another episode, the student outputs:
\begin{quote}
\small
\texttt{Thought: I need to put the laptop in the safe. I'll put the laptop in the safe now.}\\
\texttt{Action: put laptop 1 in/on safe 1}
\end{quote}
Again, the environment returns \texttt{Nothing happens}, indicating that the action failed to produce a valid state transition.
In these cases, \ours{} bypasses semantic teacher judgment and directly applies full intervention with $i_t=1$.
This deterministic-failure pathway provides a high-precision correction signal for malformed, invalid, or ineffective executable actions.

\paragraph{Discussion.}
Overall, these case studies support the motivation for selective multi-turn OPD.
Different student turns require different levels of teacher supervision: some should be left unchanged, some benefit from weak guidance, and others require strong correction.
By using environment feedback together with teacher judgment, \ours{} allocates supervision at the turn level and avoids applying dense token-level OPD uniformly across the entire trajectory.
This behavior is especially important in multi-turn environments, where over-correcting acceptable exploration can reduce behavioral diversity, while failing to intervene on invalid or harmful actions can cause errors to compound in later turns.

\section{Prompt Templates}
\label{sec:appendix_prompts}

We use task-specific ReAct-style prompts for each benchmark. All prompts enforce a fixed output protocol so that the executable action can be reliably parsed and sent to the environment. We present the prompt details in \tabref{tab:prompt_alfworld}, \tabref{tab:prompt_scienceworld} and \tabref{tab:prompt_searchqa}.

In addition to the task-specific agent prompts, \ours{} uses a separate intervention prompt to obtain turn-level teacher judgments.
Given the task goal, current environment state, admissible actions, and the student's candidate response, the teacher assigns one label from \{\textsc{Skip}, \textsc{Weak}, \textsc{Strong}\}.
The prompt is shown in \tabref{tab:prompt_intervention}.

\begin{table*}[t]
\centering
\caption{Prompt template for ALFWorld. We use a demonstration before the user turn.}
\label{tab:prompt_alfworld}
\resizebox{0.95\textwidth}{!}{%
\begin{tabular}{lp{0.78\textwidth}}
\toprule
\textbf{Field} & \textbf{Prompt Content} \\
\midrule
System prompt &
\begin{minipage}[t]{0.78\textwidth}
\ttfamily
You are an embodied agent solving a household task in a text-based simulator (AlfWorld). You receive an Observation describing what you can see and a list of Admissible commands you may execute. On every turn you must reply with exactly two lines: Thought: <one short sentence of reasoning> Action: <one command, copied verbatim from the Admissible list when possible> Plan briefly, then issue the next action. Only the line beginning with 'Action:' is used to step the environment. Do not add any other text after the Action line.
\end{minipage}
\\
\midrule
One-shot demonstration &
\begin{minipage}[t]{0.78\textwidth}
\ttfamily
An one-shot demonstration.
\end{minipage}
\\
\midrule
User turn &
\begin{minipage}[t]{0.78\textwidth}
\ttfamily
Task: \{task\_goal\}\\
Observation: \{scene\_description\}\\
Admissible: cmd1; cmd2; ...; cmd30 (+N more)
\end{minipage}
\\
\midrule
Assistant turn &
\begin{minipage}[t]{0.78\textwidth}
\ttfamily
Thought: <reasoning>\\
Action: <command>
\end{minipage}
\\
\bottomrule
\end{tabular}%
}
\end{table*}

\begin{table*}[t]
\centering
\caption{Prompt template for ScienceWorld. We use a single generic one-shot ReAct trajectory.}
\label{tab:prompt_scienceworld}
\resizebox{0.95\textwidth}{!}{%
\begin{tabular}{lp{0.78\textwidth}}
\toprule
\textbf{Field} & \textbf{Prompt Content} \\
\midrule
System prompt &
\begin{minipage}[t]{0.78\textwidth}
\ttfamily
You are an embodied agent solving a science task in the ScienceWorld text-based simulator. You receive an Observation describing what you can see and an Inventory listing what you carry. On every turn you must reply with exactly two lines: Thought: <one short sentence of reasoning> Action: <one valid action> Plan briefly, then issue the next action. Only the line beginning with 'Action:' is used to step the environment.
\end{minipage}
\\
\midrule
One-shot demonstration &
\begin{minipage}[t]{0.78\textwidth}
\ttfamily
A single generic ReAct trajectory.
\end{minipage}
\\
\midrule
User turn &
\begin{minipage}[t]{0.78\textwidth}
\ttfamily
Task: \{task\_description\}\\
Observation: \{observation\}\\
Inventory: \{inventory or "(empty)"\}
\end{minipage}
\\
\midrule
Assistant turn &
\begin{minipage}[t]{0.78\textwidth}
\ttfamily
Thought: <reasoning>\\
Action: <valid action>
\end{minipage}
\\
\bottomrule
\end{tabular}%
}
\end{table*}

\begin{table*}[t]
\centering
\caption{Prompt template for SearchQA. The agent repeatedly issues Wikipedia search queries and terminates with a concise final answer.}
\label{tab:prompt_searchqa}
\resizebox{0.95\textwidth}{!}{%
\begin{tabular}{lp{0.78\textwidth}}
\toprule
\textbf{Field} & \textbf{Prompt Content} \\
\midrule
System prompt &
\begin{minipage}[t]{0.78\textwidth}
\ttfamily
You are an expert open-domain question-answering agent. You answer questions by repeatedly issuing wikipedia search queries, reading the returned snippets, and finally producing a concise short answer.

Output protocol -- on every turn you must reply with exactly two blocks: <thought>... brief reasoning (1-2 sentences) ...</thought> <action>VERB[PAYLOAD]</action> where VERB[PAYLOAD] is exactly one of: search[your concise wikipedia query, 3-8 words] to fetch top-3 snippets answer[your final short answer] to terminate the episode Use search[...] until you have enough evidence, then answer[...]. Final answers should be short and direct (a single entity, date, or short phrase). Do not output anything after the </action> tag.
\end{minipage}
\\
\midrule
One-shot demonstration &
\begin{minipage}[t]{0.78\textwidth}
\ttfamily
A hardcoded one-shot demonstration.
\end{minipage}
\\
\midrule
User turn &
\begin{minipage}[t]{0.78\textwidth}
\ttfamily
Question: \{question\}? \\
\ttfamily
<information>1) ... 2) ... 3) ...</information>
\end{minipage}
\\
\midrule
Assistant turn &
\begin{minipage}[t]{0.78\textwidth}
\ttfamily
<thought>...</thought>\\
<action>search[query]</action>\\
\emph{or}\\
<thought>...</thought>\\
<action>answer[final]</action>
\end{minipage}
\\
\bottomrule
\end{tabular}%
}
\end{table*}

\begin{table*}[t]
\centering
\caption{Prompt template for turn-level intervention. The teacher acts as a supervision gatekeeper and assigns one intervention label for each student response.}
\label{tab:prompt_intervention}
\resizebox{0.95\textwidth}{!}{%
\begin{tabular}{lp{0.78\textwidth}}
\toprule
\textbf{Field} & \textbf{Prompt Content} \\
\midrule
System prompt &
\begin{minipage}[t]{0.78\textwidth}
\ttfamily
You are a SUPERVISION GATEKEEPER for distillation training of a smaller agent operating in a multi-turn text-based environment. Your job is to decide whether the student's proposed next action deserves teacher distillation supervision.
\end{minipage}
\\
\midrule
Episode goal &
\begin{minipage}[t]{0.78\textwidth}
\ttfamily
GOAL OF THE EPISODE:\\
Task type: \{task\_type\}\\
Task goal: \{task\_goal\}
\end{minipage}
\\
\midrule
Current environment state &
\begin{minipage}[t]{0.78\textwidth}
\ttfamily
CURRENT STATE OF THE ENVIRONMENT (what the student is reacting to):\\
Scene description: \{scene\_description\}\\
Last observation: \{last\_obs\}\\
Currently admissible actions (top \{admissible\_count\}):\\
\{admissible\}
\end{minipage}
\\
\midrule
Student response &
\begin{minipage}[t]{0.78\textwidth}
\ttfamily
STUDENT'S CANDIDATE NEXT ACTION (what we are evaluating):\\
\{llm\_text\}
\end{minipage}
\\
\midrule
Decision rubric &
\begin{minipage}[t]{0.78\textwidth}
\ttfamily
DECISION RUBRIC:\\
- SKIP: The student's action is reasonable, strategically equivalent to your preferred action, or the situation is too ambiguous for confident correction. Distillation here would just over-regularize the student.\\
- WEAK: The student's action may be suboptimal but the issue is uncertain or low-impact. A mild distillation signal is appropriate but the action is not clearly wrong.\\
- STRONG: The student's action is clearly problematic, off-task, redundant, or likely to fail the goal. The student needs to learn the correct behavior here.
\end{minipage}
\\
\midrule
Output format &
\begin{minipage}[t]{0.78\textwidth}
\ttfamily
Respond with EXACTLY ONE WORD: SKIP, WEAK, or STRONG.
\end{minipage}
\\
\bottomrule
\end{tabular}%
}
\end{table*}

\begin{algorithm}[t]
\caption{\ours{} Training Procedure}
\label{alg:sl_opd}
\begin{algorithmic}[1]
\REQUIRE Student $\pi_\theta$, teacher $\pi_\phi$, environment $\mathcal{E}$, horizon $T$, weak weight $\alpha$, constant $\epsilon$
\STATE Collect student on-policy trajectories $\tau \sim \pi_\theta$ by interacting with $\mathcal{E}$
\FOR{each turn $(h_t,a_t)$ in the collected trajectories}
    \IF{$a_t$ causes a deterministic execution failure}
        \STATE Set $z_t=\textsc{Strong}$ and $i_t=1$
    \ELSE
        \STATE Ask the teacher to assign $z_t \in \{\textsc{Skip},\textsc{Weak},\textsc{Strong}\}$
        \STATE Map the label to
        \[
        i_t =
        \begin{cases}
        0, & z_t=\textsc{Skip},\\
        \alpha, & z_t=\textsc{Weak},\\
        1, & z_t=\textsc{Strong}.
        \end{cases}
        \]
    \ENDIF
    \STATE Compute teacher confidence
    \[
    c_t =
    \frac{1}{|a_t|}
    \sum_{j=1}^{|a_t|}
    \max_v \pi_\phi(v \mid h_t,a_{t,<j}).
    \]
    \STATE Set the selective weight $s_t=i_t c_t$
\ENDFOR
\STATE Let $N=\sum_t |a_t|$ and compute
\[
Z=
\frac{N}{\max\left(\sum_t s_t |a_t|,\epsilon\right)}.
\]
\STATE Update the student using
\[
\mathcal{L}_{\mathrm{SL\mbox{-}OPD}}
=
Z\sum_t s_t \mathcal{L}_{\mathrm{OPD}}(h_t,a_t).
\]
\end{algorithmic}
\end{algorithm}

\end{document}